\documentclass[10pt,twocolumn,letterpaper]{article}
\usepackage{float}
\usepackage{bm}
\usepackage{amsthm}
\usepackage{multirow} 
\usepackage{makecell}
\usepackage{bbding}
\usepackage{threeparttable}
\usepackage{tabularx}
\usepackage[table]{xcolor} 
\usepackage{ulem}

\usepackage{cvpr}              
\definecolor{gray50}{gray}{0.5}

\newcommand{\mathcalbf}[1]{\bm{\mathcal{#1}}}

\definecolor{cvprblue}{rgb}{0.21,0.49,0.74}
\usepackage[pagebackref,breaklinks,colorlinks,allcolors=cvprblue]{hyperref}

\def\paperID{6630} 
\def\confName{CVPR}
\def\confYear{2026}

\title{RaUF: Learning the Spatial Uncertainty Field of Radar}

\usepackage{pifont}
\usepackage{pgf,colortbl}

\newcommand{\acrot}[1]{
 \pgfmathsetmacro\percentcolor{max(0,min(100,#1/23.0*100))} 
  \edef\tempcolor{\noexpand\cellcolor{red!\percentcolor!white}}
  \tempcolor #1 
}
\newcommand{\actrans}[1]{
 \pgfmathsetmacro\percentcolor{max(0,min(100,#1/2.0*100))}
  \edef\tempcolor{\noexpand\cellcolor{blue!\percentcolor!white}}
  \tempcolor #1 
}

\author{Shengpeng Wang \qquad Kuangyu Wang\\
Huazhong University of Science and Technology\\
{\tt\small wsp666@hust.edu.cn \qquad wangky@hust.edu.cn}
\and
Wei Wang\thanks{Corresponding author}\\
Wuhan University\\
{\tt\small wangw@whu.edu.cn}
}

\newtheorem{theorem}{Theorem}

\begin{document}

\maketitle


\begin{abstract}
Millimeter-wave radar offers unique advantages in adverse weather but suffers from low spatial fidelity, severe azimuth ambiguity, and clutter-induced spurious returns. Existing methods mainly focus on improving spatial perception effectiveness via coarse-to-fine cross-modal supervision, yet often overlook the ambiguous feature-to-label mapping, which may lead to ill-posed geometric inference and pose fundamental challenges to downstream perception tasks. In this work, we propose RaUF, a spatial uncertainty field learning framework that models radar measurements through their physically grounded anisotropic properties. To resolve conflicting feature-to-label mapping, we design an anisotropic probabilistic model that learns fine-grained uncertainty. To further enhance reliability, we propose a Bidirectional Domain Attention mechanism that exploits the mutual complementarity between spatial structure and Doppler consistency, effectively suppressing spurious or multipath-induced reflections. Extensive experiments on public benchmarks and real-world datasets demonstrate that RaUF delivers highly reliable spatial detections with well-calibrated uncertainty. Moreover, downstream case studies further validate the enhanced reliability and scalability of RaUF under challenging real-world driving scenarios. 
\end{abstract}    
\section{Introduction}
\label{sec:intro}

Radar perception~\cite{wang2025mitigating,zhang2025rald,ijcai2025p979} has become an essential component in various applications, including autonomous driving~\cite{kim2023crn,gao2022dc}, robotics~\cite{wang2025s3e,harlow2024new}, and surveillance systems. Its ability to operate effectively in adverse weather conditions and low-visibility environments makes all-weather, all-day perception possible while enhancing situational awareness and safety. Moreover, the unique Doppler information embedded in radar signals provides the relative radial velocities of scatterers, offering great potential to be exploited for localization~\cite{wang2025s3e}, velocity estimation~\cite{pang2024radarmoseve}, object detection~\cite{lin2024rcbevdet}, and segmentation tasks as a complementary cue to vision-based perception. However, the sparsity and noise inherent in radar data present significant challenges for accurate dense perception. 

\begin{figure}[t]
  \centering
   \includegraphics[width=\linewidth]{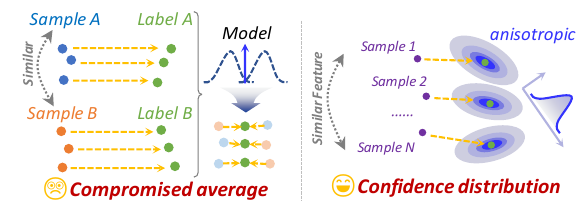}

    \caption{Traditional methods compromise under conflict, leading to \textit{ill-posed geometric inference}.}
   \label{level.sample}
\end{figure}
To improve radar perception, existing studies~\cite{fan2024enhancing,gao2025radar,ijcai2025p979} focus on enhancing radar point clouds~(PCLs) via coarse-to-fine cross-modal supervision from higher-resolution sensors, such as cameras and LiDARs. However, two major challenges persist: \textit{1) Ill-posed geometric inference from sparse measurements}. The detail refinement from low- to high-resolution is essentially achieved by the network “hallucinating” geometric structures from sparse radar cues, which may lack physical fidelity. Such \textit{ambiguous feature-to-label mappings} can induce supervisory inconsistency across samples. As shown in Fig.~\ref{level.sample}, in the absence of uncertainty modeling, the network is forced to reconcile conflicting supervisory signals, often converging toward geometrically implausible intermediate solutions that reflect neither mode of the true distribution. Such a compromised average ultimately impedes stable optimization and feature generalization. \textit{2) Overreliance on amplitude cues without robustness to spurious reflections}, many current approaches heavily rely on intensity values for radar perception, often overlooking the presence of ghost points—false detections that can arise from multipath reflections or noise. This oversight can lead to unreliable perception results owing to the misinterpretation of radar data.

To address these challenges, we delve deeper into the physical nature of radar and derive a physically grounded motivation for our framework. As illustrated in Fig.~\ref{level.uncertainty}, we observe an inherent anisotropic measurement property of radar: due to the limited number of effective angle-of-arrival antennas, the azimuth uncertainty is significantly higher than range, resulting in a characteristic ``crescent-shaped" spatial distribution. We further claim the effectiveness of uncertainty-aware learning, which transforms ambiguous feature-to-label mappings from conflicting supervision into informative cues for learning fine-grained confidence over target regions, thereby enhancing both generalization and physical interpretability. Furthermore, we exploit the Doppler consistency inherent for positive detections as shown in Fig.~\ref{level.doppler}. Doppler velocity of the reflection is determined by the radar’s ego-velocity and the scatter’s directional vector. The temporal coherence of Doppler measurements serves as a physically reliable cue for identifying and suppressing spurious or multipath-induced reflections, thus improving the robustness and reliability of spatial perception under cluttered environments.

To fully unlock such inspiring potential, we propose \textit{RaUF}, a novel doppler-aware spatial perception method, which learns the spatial anisotropic uncertainty field of radar measurements and effectively alleviates ambiguities in geometric inference. Specifically, we first introduce an anisotropic geometric uncertainty representation for radar target regions and formulate a Bayesian probabilistic loss that explicitly incorporates such geometric uncertainty. This design transforms the conflicting supervision among ambiguous feature-to-label mappings into a meaningful learning signal, enabling the network to infer fine-grained confidence distributions over target regions rather than relying on incomplete deterministic mappings. This enables the network to learn fine-grained confidence over target regions, thereby enhancing both generalization and physical interpretability. Next, we introduce a \textit{bidirectional domain attention} on spatial and Doppler features to identify and suppress spurious reflections, thus improving the robustness and reliability of spatial perception under cluttered environments. Finally, we conduct extensive experiments across public and real-world datasets and various types of radar to confirm the effectiveness, reliability, and scalability of our approach on downstream tasks.

To sum up, our primary contributions are evident in the following aspects.
\begin{itemize}
    \item We introduce a novel radar perception framework that learns anisotropic geometric uncertainty calibration to resolve the inherently ill-posed geometric inference of low-fidelity radar space. To the best of our knowledge, this is the first attempt to achieve spatial uncertainty learning for radar.
    \item We develop a bidirectional domain attention module for Doppler radar that  reinforces spatial features with Doppler information, thereby reducing false positives with a physically grounded mechanism.
    \item We conduct extensive experiments on benchmark datasets, demonstrating the superiority of our approach. Additionally, we will make our self-collected dataset publicly available to the research community.
\end{itemize}

\begin{figure*}[h]
  \centering
  \subfloat[]{
    \label{level.uncertainty}
    \includegraphics[width=0.645\linewidth]{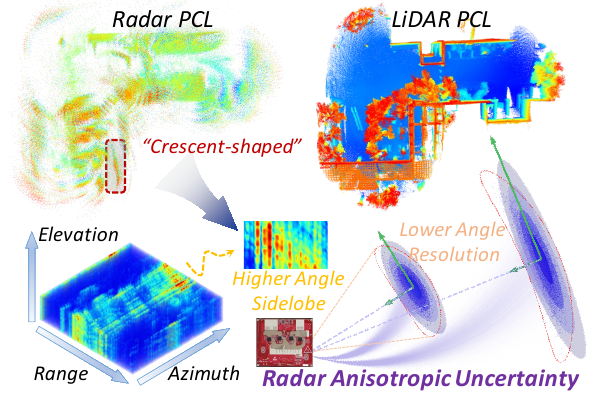}}
  \subfloat[]{
    \label{level.doppler}
    \includegraphics[width=0.35\linewidth]{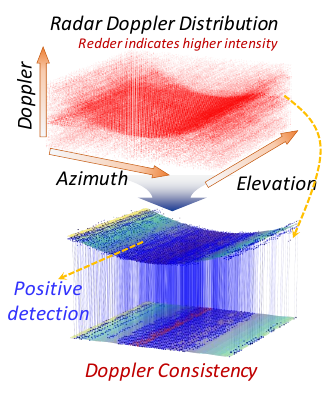}}
   \caption{Our insights:~(a) Multi-frame statistical analysis shows that radar PCLs inherently exhibit anisotropic uncertainty, unlike the isotropic nature of LiDAR. (b) The Doppler measurements of positive detections conform to kinematic constraints.~(Experimental data are based on Coloradar~\cite{kramer2022coloradar} and RaDelft~\cite{iroldan2024})}
   \label{fig:motivation}
\end{figure*}

\section{Related Work}
\subsection{Point Cloud Extraction for Radar}
Conventional radar point cloud extraction mainly relies on signal processing methods such as Constant-False-Alarm-Rate~(CFAR)~\cite{nitzberg1972constant} and its variants~\cite{blake1988cfar,gandhi2002optimality}. However, constrained by the limited number of antennas, the resulting point clouds are typically sparse and noise-prone, insufficient for dense perception tasks like velocity estimation, environment reconstruction, and localization. Although virtual antenna array designs~\cite{wang2009high,zheng2024enhancing,dodds2025non} have significantly improved angular resolution, they often suffer from hardware incompatibility or strong dependence on motion priors. 

Recently, growing efforts have shifted toward generating high-fidelity, dense radar point clouds through coarse-to-fine cross-modal supervision from higher-resolution sensors. RPDNet~\cite{cheng2022novel} employs deep networks to refine Range-Doppler spectrograms under LiDAR supervision, yet remains constrained by the CFAR-style processing paradigm that limits angular resolution. RadarHD~\cite{prabhakara2023high} projects multi-frame CFAR-sparse point clouds onto planar Range-Azimuth bird's-eye view maps and employs a U-Net~\cite{ronneberger2015u} to learn structured occupancy representations. More recently, Radar-Diffusion~\cite{zheng2024enhancing} and RaLD~\cite{zhang2025rald} introduce diffusion-based spectral denoising to enhance spatial resolution, but their heavy reliance on intensity cues leaves them vulnerable to artifacts and multipath distortions. SDDiff~\cite{ijcai2025p979} jointly estimates point clouds and ego velocity for single-chip radar, achieving complementary benefits across tasks. Yet, its dependence on computationally repeated diffusion iterations hampers scalability and onboard deployment in cascaded radar systems.

\subsection{Point Cloud Uncertainty Quantification}
Some work such as GICP~\cite{koide2021voxelized} could quantify local spatial variance via statistical point distributions. Such uncertainty descriptions capture in-domain spatial variance within dense geometric structures but are ill-suited for the sparse and noisy radar points. S3E~\cite{wang2025s3e} and UTR~\cite{barnes2020under} estimate uncertainty from attention-weighted spectral features and data-association, where occupancy confidence is inferred from isotropic score. However, these models fail to capture the anisotropic physical characteristics of radar that stem from distinct range-angle resolution disparities. Inspired by structure-from-motion and pose-graph optimization, radar SLAM systems~\cite{zhuang20234d,gao2022dc,gao2023robust} introduce back-end uncertainty propagation, modeling detection-level landmark uncertainties through pose refinement. But they require multi-frame consistency and precise landmark correspondence. 
\section{Methodology}
In this section, we first introduce the Bayesian Probabilistic Model~(BPM) for radar uncertainty learning in Sec.~\ref{BPM}. Then we present Doppler-Aware Predictive Enhancement to exploit Doppler consistency for improving radar geometric representation in Sec.~\ref{sub.DAPM}. followed by the architecture and uncertainty training details of our proposed RaUF as illustrated in Sec.~\ref{sub.Network}. The framework is shown in Fig.~\ref{fig:overview}.

\begin{figure*}[h]
  \centering
   \includegraphics[width=\linewidth]{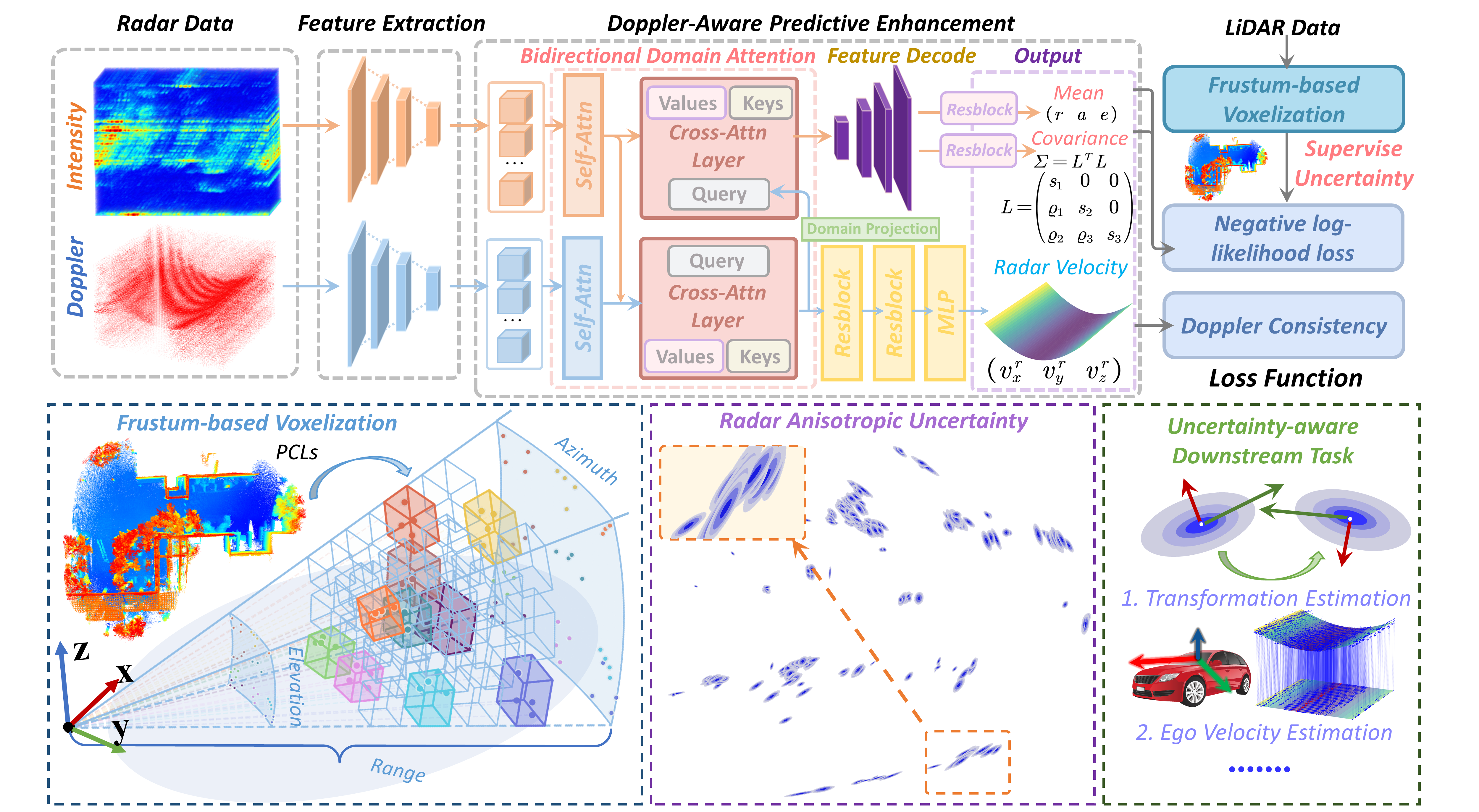}

   \caption{Framework of our RaUF for learning radar anisotropic uncertainty with doppler-aware predictive enhancement. }
   \label{fig:overview}
\end{figure*}

\subsection{BPM for Radar Uncertainty Learning}\label{BPM}
To mitigate ill-posed geometric inference from radar sparse measurements, we formulate Bayesian Probabilistic Model~(BPM) to learn radar spatial uncertainty. We redefine the task as estimating the localization predictive model $f_{\bm\theta}(x)$ and uncertainty quantification model $g_{\bm\varphi}(x)$ parameterized by $\bm\theta$ and $\bm\varphi$, respectively. $x\in \mathbb{R}^{R\times A\times E \times 2}$ is the radar measurement input, where $R$, $A$, and $E$ denote the range, azimuth, and elevation dimensions, respectively, which are supervised by occupancy ground truth $y$. To exploit Doppler consistency for suppressing spurious reflections, we concatenate intensity and Doppler domains into the feature channel. Under BPM we specify a likelihood $p(y|\bm\theta, \bm\varphi,x)$ to model the data generation process. Learning $(\bm\theta,\bm\varphi)$ is equivalent to minimizing the negative log-posterior:
\begin{equation}
	\begin{aligned}
\left( \hat{\boldsymbol{\theta}},\hat{\boldsymbol{\varphi}} \right) &=\mathrm{arg}\min_{\boldsymbol{\theta },\boldsymbol{\varphi }} \left[ -\log p(\boldsymbol{\theta },\boldsymbol{\varphi }\mid \mathcalbf{D} ) \right] \\
&=\mathrm{arg}\min_{\theta} \left[ -\log p(\mathcalbf{D} \mid \boldsymbol{\theta },\boldsymbol{\varphi })-\log p(\boldsymbol{\theta },\boldsymbol{\varphi }) \right]
\end{aligned}
\end{equation}
where $\mathcalbf{D}=\{ (x_n,y_n) \}_{n=1}^N$. With the flat prior, the objective is minimizing $-\log p(\mathcalbf{D} \mid \boldsymbol{\theta },\boldsymbol{\varphi })=-\log\prod_{i=1}^Np(y_i\mid x_i,\bm\theta)
$. Considering the anisotropic uncertainty of radar measurements as shown in Fig.~\ref{level.uncertainty}, we model the likelihood with a heteroscedastic Gaussian distribution with $y_i = f_{\bm\theta}(x_i) + \epsilon_i$, where $\epsilon_i \sim \mathcal{N}(0, g_{\bm\varphi}(x_i))$. The conditional likelihood is given by:
\begin{align}
p(y_i\mid x_i,\boldsymbol{\theta },\boldsymbol{\varphi })\:=\:\frac{\exp(-||y_i-f_{\boldsymbol{\theta }}(x_i)||_{g_{\boldsymbol{\varphi }}\left( x_i \right)}^{2}/2)}{\sqrt{(2\pi )^N\det \left( g_{\boldsymbol{\varphi }}\left( x_i \right) \right)}}
\end{align}
where mahalanobis distance $||\epsilon_i||^{2}_\Sigma=\epsilon_i^T{\Sigma}^{-1}\epsilon_i$. The negative log-likelihood loss for radar uncertainty learning is:
\begin{equation}\label{nll}
	\begin{aligned}
	&\left( \hat{\boldsymbol{\theta}},\hat{\boldsymbol{\varphi}} \right) =\mathrm{arg}\min_{\boldsymbol{\theta },\boldsymbol{\varphi }}  \left[ -\log \prod_{i=1}^Np(y_i\mid x_i,\bm\theta) \right] \\
& \Rightarrow \mathrm{arg}\min_{\boldsymbol{\theta },\boldsymbol{\varphi }} \sum_{i=1}^N{\left( ||\epsilon_i(\boldsymbol{\theta })||_{g_{\boldsymbol{\varphi }}\left( x_i \right)}^{2}+\log\det \left( g_{\boldsymbol{\varphi }}\left( x_i \right) \right) \right)}
\end{aligned}
\end{equation}
The first term encourages model $\boldsymbol{\theta}$ to fit the data $\mathcalbf{D}$ while considering the predicted uncertainty $\boldsymbol{\varphi}$ from radar, and the second term penalizes large uncertainty estimates to prevent trivial solutions.

\subsection{Doppler-Aware Predictive Enhancement}\label{sub.DAPM}
Only intensity information may bring many spurious reflections due to multipath effects in radar perception. As shown in Fig.~\ref{level.doppler}, we observe the Doppler measurements of positive radar reflections conform to kinematic constraint, while most spurious reflections deviate from it. To effectively exploit Doppler consistency for enhancing radar geometric representation, we first analyze the relationship between Doppler velocity and radar intrinsic motion in Theorem~\ref{dopplervelocity}.

\begin{theorem}\label{dopplervelocity}
Doppler velocity of the reflection, i.e. relative radial velocity $v^r_{i,j}$, is determined by the radar's ego-velocity $\boldsymbol{v}^r$ and the scatter's directional vector $(\alpha,\beta)$ when the target is stationary relative to the ground, where $(\alpha,\beta)$ denotes the azimuth and elevation of the scatter with respect to the radar's coordinate system.
\end{theorem}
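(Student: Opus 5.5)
We derive the Doppler relation directly from rigid-body kinematics, working in the radar's own coordinate frame. Suppose the radar (rigidly attached to the platform) moves with ego-velocity $\boldsymbol{v}^r$ expressed in its frame. Let scatterer $(i,j)$ sit at position $\boldsymbol{p}_{i,j}$ relative to the radar origin, observed at azimuth $\alpha$ and elevation $\beta$. Its unit line-of-sight vector is then
\begin{equation*}
\boldsymbol{d}(\alpha,\beta)=\begin{bmatrix}\cos\beta\cos\alpha & \cos\beta\sin\alpha & \sin\beta\end{bmatrix}^{T},
\qquad \boldsymbol{p}_{i,j}=\rho\,\boldsymbol{d}(\alpha,\beta),
\end{equation*}
where $\rho=\|\boldsymbol{p}_{i,j}\|$ is the range.

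The FMCW Doppler shift measures the time derivative of range, $\dot\rho$. Differentiating $\rho^2=\boldsymbol{p}_{i,j}^T\boldsymbol{p}_{i,j}$ gives $\dot\rho=\boldsymbol{d}^T\dot{\boldsymbol{p}}_{i,j}$. Stationarity of the target relative to the ground means its world velocity is zero, so its velocity relative to the radar is $\dot{\boldsymbol{p}}_{i,j}=-\boldsymbol{v}^r$. Therefore
\begin{equation*}
v^r_{i,j}=-\boldsymbol{d}(\alpha,\beta)^T\boldsymbol{v}^r=-\left(v_x\cos\beta\cos\alpha+v_y\cos\beta\sin\alpha+v_z\sin\beta\right),
\end{equation*}
which depends only on $\boldsymbol{v}^r$ and $(\alpha,\beta)$. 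The sign follows the convention that positive radial velocity means the range is increasing.

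The step needing most care is the velocity of a static point seen from a frame that may also be rotating. In the rotating frame an extra term $-\boldsymbol{\omega}\times\boldsymbol{p}_{i,j}$ appears. However, $\boldsymbol{d}^T(\boldsymbol{\omega}\times\rho\,\boldsymbol{d})=0$ because a cross product is orthogonal to each of its factors, so rotation contributes nothing to the radial component. This means angular velocity can be ignored, as long as $\boldsymbol{v}^r$ denotes the velocity of the radar origin itself; for a radar mounted off the platform's rotation centre, that velocity includes the lever-arm contribution.

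Beyond this, I would state the assumption that intra-frame motion is negligible, so that $\boldsymbol{d}$ is effectively constant over one chirp sequence. I would also note that a moving target adds its own term $\boldsymbol{d}^T\boldsymbol{v}^{t}$, which is exactly the deviation the attention module exploits to separate dynamic points from static ones.
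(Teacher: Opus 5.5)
Your proof is correct. It is the same line-of-sight projection that the paper's Fig.~\ref{fig:doppler_consistency} depicts; the paper writes out no derivation and defers it to an online supplement. You obtain $v^r_{i,j}=-\boldsymbol{d}(\alpha,\beta)^T\boldsymbol{v}^r$, where $\boldsymbol{d}$ is exactly the first column of the Jacobian in Theorem~\ref{anisotropic_uncertainty}. Your extra point, that the rotational term $\boldsymbol{\omega}\times\boldsymbol{p}_{i,j}$ contributes nothing radially so only the radar origin's velocity (including any lever-arm part) matters, is a refinement the paper does not state.
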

\begin{proof}
\begin{figure}[H] 
	\centering  
	\includegraphics[width=.86\linewidth]{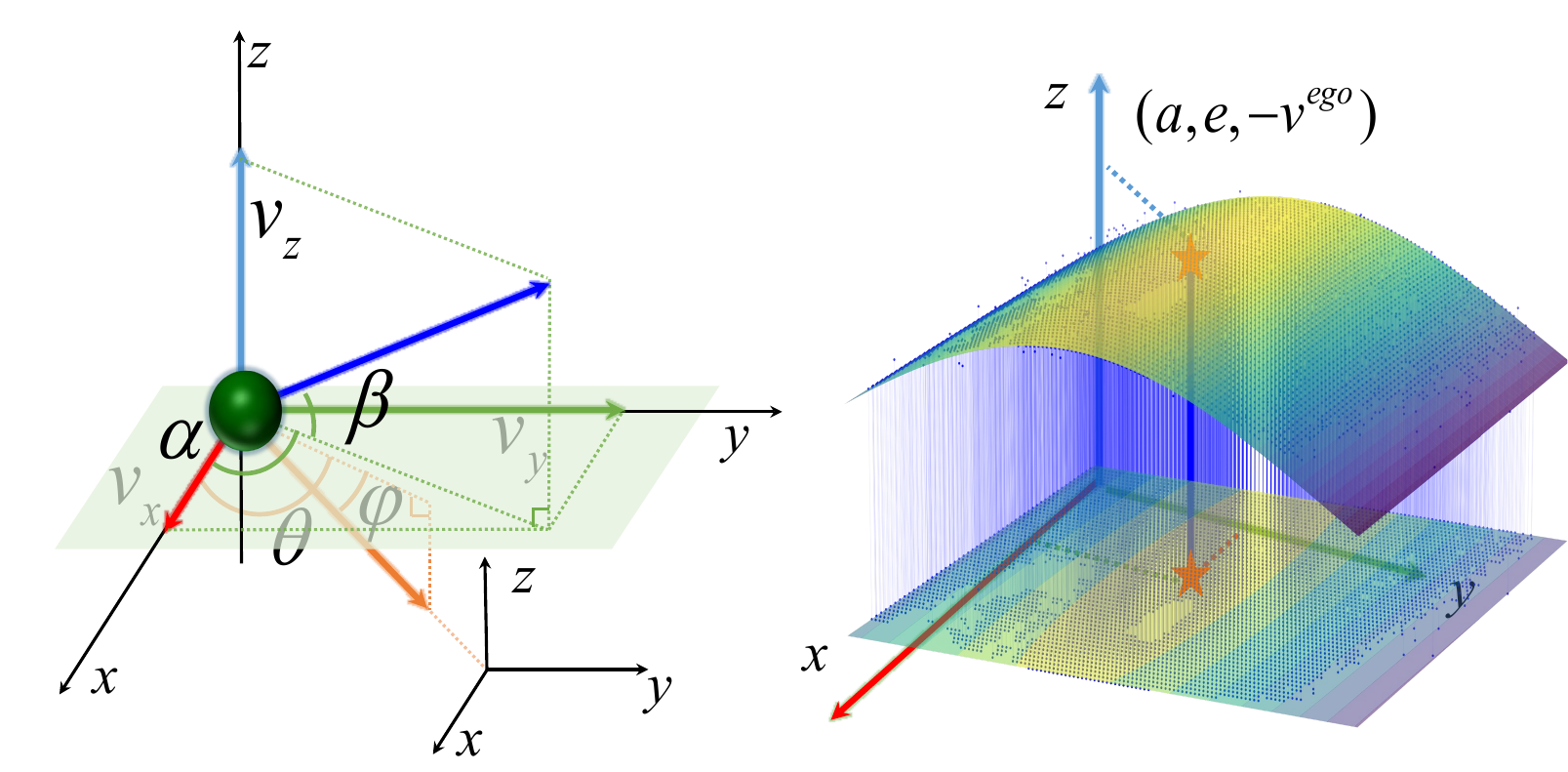}
	\caption{Doppler-Consistency for stationary positive scatters.}
	\label{fig:doppler_consistency}
\end{figure}
Fig.~\ref{fig:doppler_consistency} illustrates the principle of Doppler-Consistency. The detailed proof and technical details are available at \url{https://shengpeng.wang/rauf/#Supplementary}.
\end{proof}
Therefore, spurious reflections can be effectively suppressed through Doppler consistency, as established in Theorem~\ref{dopplervelocity}. To this end, we design a Doppler-Aware Predictive Enhancement Module, as illustrated in Fig.~\ref{fig:overview}, which reinforces spatial features with Doppler information via the proposed Bidirectional Domain Attention Fusion~(BDAF) described in Sec.~\ref{sub.Network}.

\subsection{Network \& Uncertainty Training}\label{sub.Network}
\textbf{Radar Feature Extraction.} Given the radar measurement input $x$ including intensity and Doppler channels, we first extract spatial features ${\bm F}_s \in \mathbb{R}^{H \times W\times C_s}$ encoding spatial occupancy and Doppler features ${\bm F}_d \in \mathbb{R}^{H \times W\times C_d}$ reflecting velocity distribution using two separate 3D CNN backbones. Unlike previous methods~\cite{10592769,prabhakara2023high} that either collapse elevation into a feature channel or use 2D convolutions along range and azimuth, our 3D CNN backbone jointly models range, azimuth, and elevation correlations, enabling richer spatial representations and improved capture of radar scattering patterns.

\noindent \textbf{Bidirectional Domain Attention Fusion.} The intensity map provides reliable spatial localization cues, while the Doppler map helps suppress spurious reflections. To effectively exploit Doppler consistency for enhancing radar geometric representation, we propose a Bidirectional Domain Attention Fusion~(BDAF) module for doppler radar to reinforce spatial features with Doppler information. As illustrated in Fig.~\ref{fig:overview}, the BDAF consists of two cross-attention layers that model the correlation between spatial and Doppler features. Specifically, both features are first patchified and embedded with positional encodings, forming sequences ${\bm S}_p, {\bm D}_p \in \mathbb{R}^{L \times C}$ to enable token-wise interaction under a unified representation length $L=\frac{HW}{p^2}$. The first stage of BDAF enhances Doppler features by leveraging spatial cues as attention queries with ${\bm Q_s}={\bm W}_s^q{\bm S}_p$, ${\bm K}_d={\bm W}_d^k{\bm D}_p$, and ${\bm V}_d={\bm W}_d^v{\bm D}_p$, where ${\bm W}_s^q$, ${\bm W}_d^k$, and ${\bm W}_d^v$ are learnable projection matrices. The attention output is computed as:
\begin{equation}
	{\bm S}_p\rightarrow {\bm D}_p^{\prime} =\mathrm{Softmax}\left( \frac{{\bm Q_s}{\bm K}_d^T}{\sqrt{d_k}} \right) {\bm V}_d
\end{equation}
where $d_k$ is the dimension of the key vectors. This allows the network to focus on regions where Doppler measurements are consistent with the radar's ego-motion. ${\bm D}_p^{\prime}$ is then projected into Doppler features $\tilde{\bm D}_p$ after concatenation with the original Doppler features ${\bm D}_p$ through residual networks. To further align Doppler information with spatial occupancy semantics, we introduce a Domain Projection network that transforms $\tilde{\bm D}_p$ into an occupancy-like latent representation ${\bm D}_p^{\prime \prime}$ before feeding it into the second cross-attention layer, which is implemented as a lightweight residual bottleneck. The output encodes spatial likelihood from Doppler distributions, serving as a differentiable physical prior for subsequent attention refinement.

In turns, the second stage reverses this process to enhance spatial features using Doppler-projected tokens as  queries to guide spatial feature reconstruction with ${\bm S}_p^{\prime}=Attention({\bm Q_d}={\bm W}_d^q{\bm D}_p^{\prime \prime}, {\bm K}_s={\bm W}_s^k{\bm S}_p, {\bm V}_s={\bm W}_s^v{\bm S}_p)$. The final enhanced spatial representation is obtained as ${\bm F}_{s}^{\prime} = \mathrm{ResNet}(\mathrm{Concat}({\bm S}_p, {\bm S}_p^{\prime}))$. This bidirectional refinement enables mutual compensation between spatial precision and Doppler coherence and enhances focus on consistent Spatial-Doppler patterns. The output ${\bm F}_{s}^{\prime}$, $\tilde{\bm D}_p$ are subsequently fed into feature decoders for spatial NLL loss $\mathcal{L} _{\mathrm{spatial}}$ and velocity regression loss $ \mathcal{L} _{\mathrm{doppler}}$, respectively.

\noindent \textbf{Anisotropic Uncertainty Learning for Radar.} To model the anisotropic uncertainty of radar measurements as shown in Fig.~\ref{level.uncertainty}, we design the uncertainty quantification model $g_{\bm\varphi}(x)$ to predict a heteroscedastic Gaussian distribution with an anisotropic covariance matrix. Specifically, we parameterize the uncertainty in polar coordinates, predicting radial uncertainty $\sigma_t$ and angular uncertainty $\sigma_a$ separately. The approximate uncertainty distribution in the Cartesian coordinate system is derived through first-order error propagation as detailed in Theorem~\ref{anisotropic_uncertainty}.

\begin{theorem}\label{anisotropic_uncertainty}
For a radar detection measurement ${\bm {\hat p}}_{i,j,k}$ truely located at $(r_i, \alpha_j, \beta_k)$ in polar coordinates with radial range uncertainty $\boldsymbol{\delta }_{r _i}\sim \mathcal{N} \left( 0,\sigma^2_{r_i} \right) $, azimuth uncertainty $\boldsymbol{\delta }_{\alpha _i}\sim \mathcal{N} \left( 0,\sigma^2_{\alpha_i} \right) $ and elevation uncertainty $\boldsymbol{\delta }_{\beta _i}\sim \mathcal{N} \left( 0,\sigma^2_{\beta_i} \right) $, the corresponding uncertainty in Cartesian coordinates can be approximated by a Gaussian distribution.
\end{theorem}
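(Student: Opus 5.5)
The plan is to linearize the polar-to-Cartesian map around the true location and push the independent Gaussian perturbations through it. Write the map $h(r,\alpha,\beta)=(r\cos\beta\cos\alpha,\; r\cos\beta\sin\alpha,\; r\sin\beta)^T$, so that the measured point is ${\bm {\hat p}}_{i,j,k}=h(r_i+\boldsymbol{\delta}_{r_i},\alpha_j+\boldsymbol{\delta}_{\alpha_j},\beta_k+\boldsymbol{\delta}_{\beta_k})$. I take the three perturbations to be mutually independent, so the perturbation vector $\boldsymbol{\delta}=(\boldsymbol{\delta}_{r},\boldsymbol{\delta}_{\alpha},\boldsymbol{\delta}_{\beta})^T$ is distributed as $\mathcal{N}(0,\Sigma_{\mathrm{pol}})$ with $\Sigma_{\mathrm{pol}}=\mathrm{diag}(\sigma_{r_i}^2,\sigma_{\alpha_j}^2,\sigma_{\beta_k}^2)$.

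\textbf{Step 1.} Take the first-order Taylor expansion ${\bm {\hat p}}_{i,j,k}=h(r_i,\alpha_j,\beta_k)+J\boldsymbol{\delta}+O(\|\boldsymbol{\delta}\|^2)$. Here $J=\partial h/\partial(r,\alpha,\beta)$ is the Jacobian at the true point. Its columns are the unit radial direction $(\cos\beta\cos\alpha,\cos\beta\sin\alpha,\sin\beta)^T$, then $r\cos\beta(-\sin\alpha,\cos\alpha,0)^T$, then $r(-\sin\beta\cos\alpha,-\sin\beta\sin\alpha,\cos\beta)^T$.

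\textbf{Step 2.} An affine image of a Gaussian vector is Gaussian. Dropping the remainder therefore gives ${\bm {\hat p}}_{i,j,k}\approx\mathcal{N}\big(h(r_i,\alpha_j,\beta_k),\,\Sigma_{\mathrm{cart}}\big)$ with $\Sigma_{\mathrm{cart}}=J\Sigma_{\mathrm{pol}}J^T$.

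\textbf{Step 3.} Make the anisotropy explicit. The three columns of $J$ are mutually orthogonal: $J=RD$, where $R$ is the rotation whose columns are the unit radial, azimuthal and elevation directions, and $D=\mathrm{diag}(1,\,r\cos\beta,\,r)$. Then
\begin{equation*}
\Sigma_{\mathrm{cart}}=R\,\mathrm{diag}\big(\sigma_{r_i}^2,\;r_i^2\cos^2\beta_k\,\sigma_{\alpha_j}^2,\;r_i^2\sigma_{\beta_k}^2\big)R^T .
\end{equation*}
This is an ellipsoid aligned with the local polar frame. Its tangential axes grow linearly with range, which is the crescent-like spread seen in Fig.~\ref{level.uncertainty}. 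This is also the form fed into the Mahalanobis term of Eq.~(\ref{nll}).

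\textbf{Main obstacle.} The real content is in justifying ``approximated'': the exact distribution is not Gaussian, because $h$ is nonlinear and a true crescent is curved. I would bound the remainder using the Hessian of $h$. Its entries are $O(1)$ in the angles and $O(r)$ in the angle-angle block. This makes the neglected term $O\big(r(\sigma_\alpha^2+\sigma_\beta^2)+\sigma_r(\sigma_\alpha+\sigma_\beta)\big)$, whereas the retained term is of order $r\sigma_\alpha$. The approximation is therefore valid when $\sigma_\alpha,\sigma_\beta\ll1$ rad, which holds for realistic angular resolutions. I would state the theorem under this small-angular-uncertainty assumption rather than try to control the tails exactly.
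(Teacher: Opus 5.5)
Your proposal is correct and takes the same route as the paper: you linearize the polar-to-Cartesian map at the true point (your Jacobian columns coincide with the paper's matrix), then use independence and closure of Gaussians under linear maps to get the covariance $J\,\mathrm{diag}(\sigma_r^2,\sigma_\alpha^2,\sigma_\beta^2)\,J^T$. Your Step 3 factorization $J=R\,\mathrm{diag}(1,r\cos\beta,r)$ and your Hessian-based remainder estimate, which makes ``approximated'' precise for $\sigma_\alpha,\sigma_\beta\ll 1$, are correct refinements that the paper's proof does not include.
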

\begin{proof}
Considering $\bm p$ can be approximated by a first-order Taylor expansion around $(r, \alpha, \beta)$, we have:
\begin{equation}
\begin{aligned}
&\boldsymbol{\delta }_{\boldsymbol{p}}=\frac{\partial \boldsymbol{p}}{\partial r}\boldsymbol{\delta }_r+\frac{\partial \boldsymbol{p}}{\partial \alpha}\boldsymbol{\delta }_{\alpha}+\frac{\partial \boldsymbol{p}}{\partial \beta}\boldsymbol{\delta }_{\beta}\\
&=\left[ \begin{matrix}
	\cos \alpha \cos \beta&		-r\sin \alpha \cos \beta&		-r\cos \alpha \sin \beta\\
	\sin \alpha \cos \beta&		r\cos \alpha \cos \beta&		-r\sin \alpha \sin \beta\\
	\sin \beta&		0&		r\cos \beta\\
\end{matrix} \right] \left[ \begin{array}{c}
	\boldsymbol{\delta }_r\\
	\boldsymbol{\delta }_{\alpha}\\
	\boldsymbol{\delta }_{\beta}\\
\end{array} \right]
\end{aligned}
\end{equation}
Let the Jacobian matrix denoted as ${\bm J}$. Since $\boldsymbol{\delta }_r$, $\boldsymbol{\delta }_{\alpha}$, and $\boldsymbol{\delta }_{\beta}$ are independent Gaussian variables, $\boldsymbol{\delta }_{\boldsymbol{p}}$ is also a Gaussian variable with covariance . Therefore, the uncertainty in Cartesian coordinates can be approximated by a Gaussian distribution $\mathcal{N}(0,{\bm \Sigma})$, with covariance matrix $\mathbf{\Sigma }=\boldsymbol{JDJ}^T$, where $\mathbf{D}=\mathrm{diag}(\sigma^2_r,\sigma^2_{\alpha},\sigma^2_{\beta})$.
\end{proof}

According to Theorem~\ref{dopplervelocity}, ``crescent-shaped'' uncertainty in radar measurements can be approximated by ``ellipsoid-shaped'' confidence with an anisotropic Gaussian distribution as shown in Fig.~\ref{level.uncertainty}. Combining Eqn.~\ref{nll} and Theorem~\ref{anisotropic_uncertainty}, we optimize the spatial localization model $f_{\bm\theta}(x)$ and uncertainty model $\mathbf{\Sigma }=g_{\bm\varphi}(x)$ by minimizing the negative log-likelihood loss with anisotropic covariance matrix as:
\begin{equation}\label{finalnll}
	\mathcal{L} _{\mathrm{spatial}}=\sum_{i=1}^N{\left( ||\epsilon _i(\boldsymbol{\theta })||_{g_{\boldsymbol{\varphi }}\left( x_i \right)}^{2}+\log\det \left( g_{\boldsymbol{\varphi }}\left( x_i \right) \right) \right)}
\end{equation}
Besides, to further enhance Doppler consistency, we introduce an auxiliary velocity regression task that predicts the Doppler velocity for each radar detection.

\noindent \textbf{Frustum-based Voxelization Strategy.} To effectively supervise radar uncertainty learning, we adopt a frustum-based voxelization strategy to construct occupancy ground truth from LiDAR PCLs. Given the radar's intrinsic parameters, we generate frustum-shaped regions by extending rays along range, azimuth, and elevation. All LiDAR points within this frustum are considered as potential reflections corresponding to radar detections. This approach allows us to create a more accurate occupancy ground truth that reflects the inherent uncertainty in radar measurements.

\section{Experiments and Evaluation}
We evaluate the proposed method using both the publicly available Coloradar and Radelft datasets, as well as a self-collected dataset across various indoor and outdoor scenarios with different types of radar sensors.

\subsection{Dataset}
\noindent \textbf{Radelft Dataset.}
Radelft dataset~\cite{iroldan2024} provides synchronized radar analog-to-digital converter~(ADC) data and LiDAR data collected in large-scale urban environments. It uses a long-range cascade radar with maximum detection range of 51 meters to record data for about 35 minutes in 7 different scenes.

\noindent \textbf{ColoRadar Dataset.} It contains 43,000 radar frames collected in various indoor and outdoor environments using a short-range single radar with 15 m maximum detection range and a cascade radar. We use LiDAR point clouds as the ground truth for PCL extraction estimation and velocity solved through odometry for velocity supervision.

\noindent \textbf{Self-collected Dataset.}
We collected real-world data from diverse environments using our platform to further validate the model's generalization. The dataset includes single-chip and cascade radar data in a total of over 11,000 frames across various indoor and outdoor scenarios. Reliable velocity ground truth is obtained using Fast-Livo2~\cite{Fastlivo2}.

\begin{figure*}[t]
  \centering
   \includegraphics[width=\linewidth]{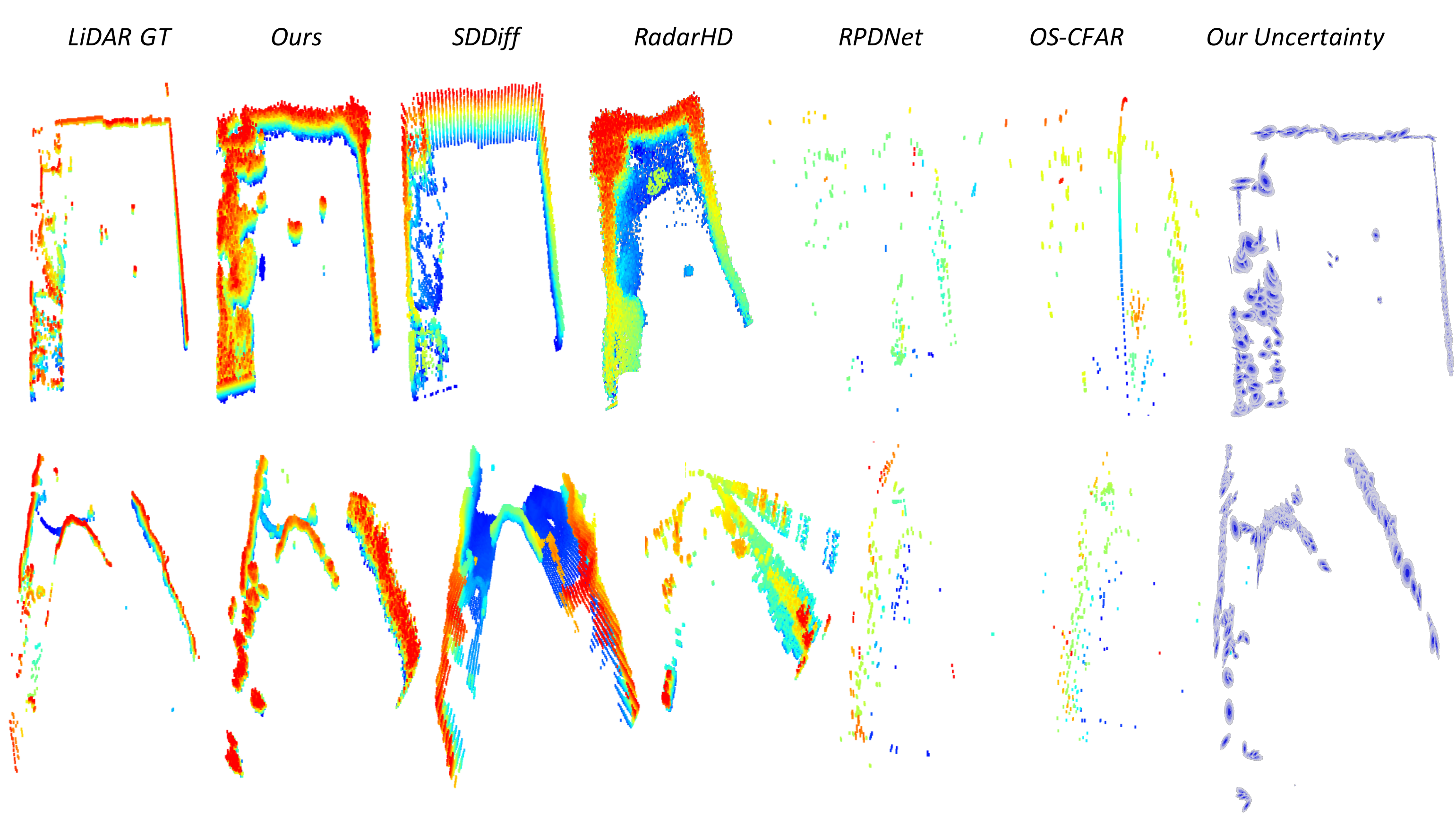}
   \caption{Bird’s-Eye View visualization of Radar point clouds augmented by various models and LiDAR ground truth.}
   \label{fig:gallery}
\end{figure*}

\noindent \textbf{Implementation Details.}
For the negative log-likelihood (NLL) loss defined in Eqn.~\ref{finalnll}, we predict the exponential of the variance term to enforce positivity, thereby mitigating numerical instability during model training. For supervising occupancy and velocity estimates, we employ an MSE loss with a weighting coefficient of 0.001. We utilize the AdamW optimizer with an initial learning rate of $2 \times 10^{-4}$ for training. All experiments are conducted on a workstation equipped with four NVIDIA GeForce RTX 4090 GPUs and an Intel Xeon Gold 6226R CPU, with the total training time amounting to 6 days.

\subsection{Baseline \& Case Studies}
RaUF is the first framework to jointly reconstruct radar point clouds and explicitly estimate their spatial uncertainties. To evaluate its effectiveness and reliability, we compare it with traditional model-based methods OS-CFAR~\cite{blake1988cfar} and RPDNet~\cite{cheng2022novel}, as well as state-of-the-art learning-based methods RadarHD~\cite{prabhakara2023high} and SDDiff~\cite{ijcai2025p979}. 
Furthermore, case studies on transformation estimation~(TE) and ego-velocity estimation~(EVE) are conducted against ICP~\cite{besl1992method}, GICP~\cite{segal2009generalized}, RANSAC~\cite{fischler1981random}, and RadarEVE~\cite{pang2024radarmoseve}, confirming that uncertainty estimation substantially enhances downstream perception robustness and accuracy.

\subsection{Evaluation Metric}

\noindent \textit{1) Effectiveness:}
We employ Chambers Distance~(CD) and F-score~(FS) to evaluate the representational effectiveness of reconstructed point clouds $\mathcal{P} $.

\noindent \textit{2) Reliability:}
To evaluate the reliability of generated PCLs against clutters $\mathcal{P} _{\mathrm{c}}$, we define Clutter Point Ratio~(CPR) $\eta$ calculated as follows:
\begin{align}
    \eta =\frac{|\mathcal{P} _{\mathrm{c}}|}{|\mathcal{P} |},\mathcal{P} _{\mathrm{c}}=\{p\in \mathcal{P} ,\text{s.t.}, d\left( q,p \right) >\zeta ,\:\forall q\in \mathcal{Q} \}
\end{align}
where $\mathcal{Q} $ is the set of ground truth PCLs, $d(\cdot ,\cdot )$ is the Euclidean distance function between two points, and $\zeta $ is a distance threshold set to 0.5m in our experiments. A lower CPR indicates more reliable PCLs with fewer clutter points.

\noindent \textit{3) Scalability:} To validate the practical utility of the proposed uncertainty estimation, we assess its impact on downstream tasks to highlight the scalability and general applicability of our approach across diverse radar perception tasks.

\begin{table*}[t]
\centering
\begin{minipage}{0.6\textwidth}
\centering
\fontsize{7}{7}\selectfont
\setlength{\tabcolsep}{0.5pt}
\begin{tabular}{
    c|ccccccccccccc
}
\toprule[0.5pt]
\toprule
\multirow{2}{*}{\makecell{Radar \\ Category}} &
\multirow{2}{*}{\makecell{Detection \\ Methods}} &
\multicolumn{2}{c}{Armyroom} &
\multicolumn{2}{c}{Classroom} &
\multicolumn{2}{c}{Aspen Room} &
\multicolumn{2}{c}{Labroom} &
\multicolumn{2}{c}{Hallways} &
\multicolumn{2}{c}{Longboard} \\
\cmidrule(lr){3-4}
\cmidrule(lr){5-6}
\cmidrule(lr){7-8}
\cmidrule(lr){9-10}
\cmidrule(lr){11-12}
\cmidrule(lr){13-14}
& & CD $\downarrow $ & FS $\uparrow $ & CD $\downarrow $ & FS $\uparrow $ & CD $\downarrow $ & FS $\uparrow $ & CD $\downarrow $ & FS $\uparrow $ & CD $\downarrow $ & FS $\uparrow $ & CD $\downarrow $ & FS $\uparrow $  \\

\midrule
\multirow{6}{*}{\rotatebox[origin=c]{90}{\makecell{Coloradar \\ Single-Chip}}} &
{OS-CFAR}~\cite{blake1988cfar} 
& \acrot{4.15} & \actrans{0.19} & \acrot{4.81} & \actrans{0.17} & \acrot{2.99} & \actrans{0.02}  & \acrot{2.78} & \actrans{0.02} & \acrot{2.75} & \actrans{0.02} & - & - \\
& {RPDNet}~\cite{cheng2022novel} 
& \acrot{4.07} & \actrans{0.18} & \acrot{5.83} & \actrans{0.13} & \acrot{2.46} & \actrans{0.03}  & \acrot{2.45} & \actrans{0.03} & \acrot{2.64} & \actrans{0.03} & - & - \\
& {RadarHD}$^\dag$~\cite{prabhakara2023high} 
& \acrot{1.74} & \actrans{0.42} & \acrot{2.45} & \actrans{0.28} & \acrot{0.95} & \actrans{0.23}  & \acrot{0.87} & \actrans{0.25} & \acrot{1.24} & \actrans{0.18} & - & - \\
& SDDiff~\cite{ijcai2025p979} 
& \acrot{0.42} & \actrans{0.17} & \acrot{5.38} & \actrans{0.17} & \acrot{3.78} & \actrans{0.02}  & \acrot{3.97} & \actrans{0.02} & \acrot{3.70} & \actrans{0.02} & - & -  \\
& \textbf{Ours} (w.o. NLL) 
& \acrot{2.62} & \actrans{0.53} & \acrot{6.35} & \actrans{0.31} & \acrot{0.58} & \actrans{0.38}  & \acrot{0.86} & \actrans{0.32} & \acrot{1.34} & \actrans{0.26} & - & -  \\
& \textbf{Ours} (w.o. BDA) 
& \acrot{2.00} & \actrans{0.61} & \acrot{3.44} & \actrans{0.47} & \acrot{0.54} & \actrans{0.36}  & \acrot{0.76} & \actrans{0.33} & \acrot{1.14} & \actrans{0.28} & - & -  \\
& \textbf{Ours} (w.o. GS)
& \acrot{1.33} & \actrans{0.69} & \acrot{1.89} & \actrans{0.56} & \acrot{0.48} & \actrans{0.46}  & \acrot{0.66} & \actrans{0.38} & \acrot{0.79} & \actrans{0.35} & - & -  \\
& \textbf{Ours} 
& \acrot{1.31} & \actrans{0.69} & \acrot{1.74} & \actrans{0.54} & \acrot{0.48} & \actrans{0.44}  & \acrot{0.64} & \actrans{0.38} & \acrot{0.78} & \actrans{0.34} & - & -  \\
\midrule

\renewcommand{\acrot}[1]{%
  \pgfmathsetmacro\percentcolor{max(0,min(100,#1/4.0*100))}%
  \edef\tempcolor{\noexpand\cellcolor{green!\percentcolor!white}}%
  \tempcolor\makebox[1.5em][c]{#1}%
}
\renewcommand{\actrans}[1]{%
  \pgfmathsetmacro\percentcolor{max(0,min(100,#1/1.0*100))}%
  \edef\tempcolor{\noexpand\cellcolor{orange!\percentcolor!white}}%
  \tempcolor\makebox[1.5em][c]{#1}%
}

\multirow{6}{*}{\rotatebox[origin=c]{90}{\makecell{Coloradar \\ Cascade}}} &
{OS-CFAR}~\cite{blake1988cfar} 
& \acrot{2.14} & \actrans{0.04} & \acrot{1.95} & \actrans{0.05} & \acrot{2.04} & \actrans{0.02}  & \acrot{2.20} & \actrans{0.04} & \acrot{2.19} & \actrans{0.06} & \acrot{12.99} & \actrans{0.04}   \\
& {RPDNet}~\cite{cheng2022novel} 
& \acrot{1.81} & \actrans{0.05} & \acrot{1.39} & \actrans{0.08} & \acrot{1.82} & \actrans{0.06}  & \acrot{1.78} & \actrans{0.16} & \acrot{1.84} & \actrans{0.15} &\acrot{13.75} & \actrans{0.02} \\
& {RadarHD}$^\dag$~\cite{prabhakara2023high} 
& \acrot{1.08} & \actrans{0.23} & \acrot{1.29} & \actrans{0.25} & \acrot{1.49} & \actrans{0.14}  & \acrot{1.12} & \actrans{0.22} & \acrot{1.38} & \actrans{0.19} &\acrot{4.66} & \actrans{0.19} \\
& SDDiff~\cite{ijcai2025p979} 
& \acrot{0.86} & \actrans{0.43} & \acrot{0.69} & \actrans{0.44} & \acrot{1.16} & \actrans{0.38}  & \acrot{1.97} & \actrans{0.12} & \acrot{1.92} & \actrans{0.15} & \acrot{9.00} & \actrans{0.08}  \\
& \textbf{Ours} (w.o. NLL) 
& \acrot{0.58} & \actrans{0.45} & \acrot{0.48} & \actrans{0.50} & \acrot{0.83} & \actrans{0.38}  & \acrot{1.22} & \actrans{0.31} & \acrot{1.20} & \actrans{0.33} & \acrot{7.97} & \actrans{0.20}  \\
& \textbf{Ours} (w.o. BDA) 
& \acrot{0.99} & \actrans{0.40} & \acrot{0.50} & \actrans{0.41} & \acrot{0.81} & \actrans{0.37}  & \acrot{1.15} & \actrans{0.34} & \acrot{1.11} & \actrans{0.36} & \acrot{5.98} & \actrans{0.28}  \\
& \textbf{Ours} (w.o. GS) 
& \acrot{0.50} & \actrans{0.49} & \acrot{0.45} & \actrans{0.54} & \acrot{0.76} & \actrans{0.40}  & \acrot{1.17} & \actrans{0.33} & \acrot{1.15} & \actrans{0.35} & \acrot{3.86} & \actrans{0.37}  \\
& \textbf{Ours} 
& \acrot{0.50} & \actrans{0.47} & \acrot{0.45} & \actrans{0.54} & \acrot{0.75} & \actrans{0.39}  & \acrot{1.12} & \actrans{0.34} & \acrot{1.10} & \actrans{0.36} &\acrot{3.79} & \actrans{0.36}  \\

\bottomrule[0.5pt]
\end{tabular}
\caption{Comparison of spatial detection performance across different scenes and radar categories on the Coloradar dataset.}
\vspace{0.5ex}
{\footnotesize * The Longboard scene (Single Chip) is excluded due to extremely few valid points after ground removal and FOV-constrained filtering.}
\label{tab:odd_comparison}
\end{minipage}\hfill 
\begin{minipage}{0.38\textwidth}
\centering
    \fontsize{7}{7}\selectfont
    \setlength{\tabcolsep}{0.3pt}
    \begin{tabular}{cccccccc}
        \toprule[0.5pt]
        \toprule
        \multirow{2}{*}{\makecell{Methods}} &
        \multicolumn{5}{c}{EVE Error Distribution (m/s)} &
        \multicolumn{2}{c}{TE Error} \\
        \cmidrule(lr){2-6}
        \cmidrule(lr){7-8}
       & 10\% $\downarrow $& 20\% $\downarrow $& 40\% $\downarrow $& 60\% $\downarrow $ & 80\% $\downarrow $& T.(m) $\downarrow $ & R.(deg) $\downarrow $ \\
        \cmidrule(lr){1-8}
        ICP & 0.15 & 0.20 & 0.41 & 0.65 & 1.03 & 0.99 & 0.63 \\
        GICP  & 0.13    & 0.27  & 0.38  & 0.57   & 1.26  & \textbf{0.54} & \textbf{0.50} \\
       RadarEVE  & \uline{\textbf{0.09}}  & \uline{\textbf{0.15}}     & \textbf{0.26}           & \uline{\textbf{0.37}}           & \uline{\textbf{0.75}}   & \XSolidBrush & \XSolidBrush       \\
    RANSAC & \textbf{0.11} & 0.23 & 0.32 & 0.56 & 0.85 & 0.86 & 0.65\\
        \textbf{Ours} & \uline{\textbf{0.09}}  & \textbf{0.17} & \uline{\textbf{0.25}} & \textbf{0.39} & \textbf{0.77} & \uline{\textbf{0.42}} & \uline{\textbf{0.39}}\\
        \bottomrule
        \bottomrule[0.5pt]
    \end{tabular}
    \caption{Evaluation of Case Studies on EVE and TE.}
    \label{tab:case}
    \fontsize{7}{7}\selectfont
    \setlength{\tabcolsep}{3.8pt}
    \begin{tabular}{ccccc}
        \toprule[0.5pt]
        \toprule
        \multirow{2}{*}{\makecell{Detection\\ Methods}} &
        \multicolumn{2}{c}{RaDelft Dataset} &
        \multicolumn{2}{c}{Self-Collected}  \\

        \cmidrule(lr){2-3}
        \cmidrule(lr){4-5}
        & Seq 2 & Seq 4 & Indoor & Outdoor \\

        \midrule
        OS-CFAR & 0.19~/~0.76 & 0.17~/~0.78 & 0.15~/~0.77 & 0.39~/~0.87 \\
        RPDNet  & 0.22~/~0.75 & 0.15~/~0.72 & 0.22~/~0.70 & 0.17~/~0.74 \\
        RadarHD & 0.42~/~0.58 & 0.28~/~0.59 & 0.40~/~0.56 & 0.39~/~0.66 \\
        SDDiff  & \textbf{0.66}~/~\textbf{0.44} & \textbf{0.46}~/~\textbf{0.51} & \textbf{0.43}~/~\textbf{\uline{0.44}} & \textbf{0.43}~/~\uline{\textbf{0.56}} \\
        \textbf{Ours} & \textbf{\uline{0.69}}~/~\uline{\textbf{0.41}} & \uline{\textbf{0.54}}~/~\uline{\textbf{0.50}} & \textbf{\uline{0.45}}~/~\textbf{\uline{0.44}} & \uline{\textbf{0.49}}~/~\textbf{0.61} \\

        \bottomrule[0.5pt]
    \end{tabular}
    \caption{Evaluation of reliability, representational effectiveness, and computational efficiency. (Left: F-score; Right: CPR.)}
    \label{tab:radelft}
    \end{minipage}
\end{table*}

\subsection{Performance and Analysis}
\noindent \textbf{Spatial Detection.} Qualitatively, we present the spatial detection results of different approaches for both cascade and single-chip radars across various scenes, as shown in Fig.~\ref{fig:gallery}. Our method achieves higher point density and richer structural fidelity than existing alternatives, and performs on par with the state-of-the-art SDDiff. Quantitatively, we compare point clouds generated by different methods on the Coloradar dataset against LiDAR PCLs. As shown in Tab.~\ref{tab:odd_comparison}, our method achieves more accurate prediction, outperforming traditional CFAR by 70.1\% and 5$\times$ on average CD and F-score, demonstrating superior geometric fidelity. Some baselines focus on full-scene reconstruction including ground features, whereas we perform evaluations on ground-free point clouds. This setup better aligns with the requirements of most downstream perception tasks.

In addition, we evaluate long-range cascade radar performance on the RaDelft dataset, assessing both effectiveness and robustness using CD and CPR, as well as runtime efficiency in Tab.~\ref{tab:radelft}. Our model maintains strong performance when fine-tuned on previously unseen scenes, demonstrating robust generalization capability. The results demonstrate the superiority of our method. To further validate model generalization, we perform fine-tuning on our self-collected dataset. RaUF adapts rapidly to varying sensor configurations, demonstrating robustness. This indicates that our uncertainty-quantification network can calibrate the confidence of predicted samples, effectively mitigating the issues arising from ill-posed geometric inference in low-fidelity space. Additional experimental results and scenario visualizations are provided in the supplementary material.

\begin{figure*}[h]
    \centering
    \includegraphics[width=\linewidth]{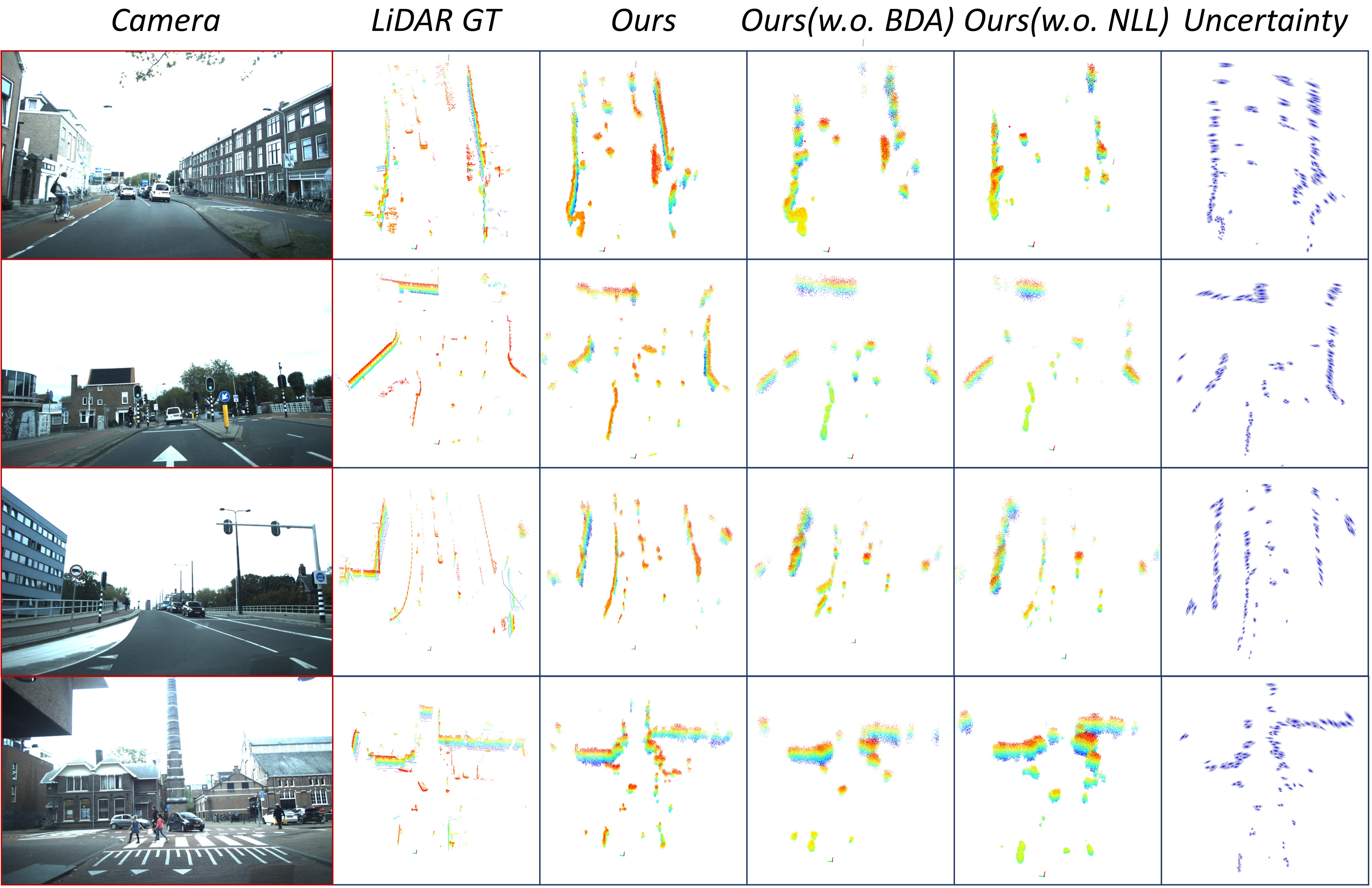}
    \caption{Bird’s-Eye View visualization of Radar point clouds augmented by various methods and LiDAR ground truth.}
    \label{fig:bev}
\end{figure*}

\noindent \textbf{Uncertainty quantification.} As illustrated in the visualizations, our model effectively leverages physically grounded radar mechanisms to capture the characteristic crescent-shaped spatial distribution. This not only provides reliable confidence estimates over target regions but also serves as a valuable prior for downstream tasks, including 3D reconstruction, localization, object tracking, and velocity estimation, by guiding geometrically consistent reasoning and robust measurement utilization.

To evaluate the scalability of our uncertainty quantification for downstream tasks, we conduct case studies on transformation estimation and ego-velocity prediction. Quantitative results are summarized in Tab.~\ref{tab:case}. Using our uncertainty-aware point clouds, we achieve the best transformation-estimation performance, improving over GICP by 22\% in both translational and rotational accuracy. This gain stems from the fact that GICP’s covariance estimation relies solely on local geometric statistics, which fail to capture the physically meaningful uncertainty intrinsic to radar measurements. This highlights the advantage of our learned uncertainty representation over traditional structure-based estimators. For ego-velocity estimation, our method surpasses classical RANSAC and performs on par with the end-to-end RadarEVE. These results further demonstrate that our Bidirectional Domain Attention mechanism facilitates mutual compensation between spatial precision and Doppler coherence, thereby enhancing the stability and physical consistency of motion inference.

\subsection{Ablation Study}
We conduct an ablation study to assess the effectiveness of uncertainty calibration in mitigating ill-posed geometric inference in low-fidelity radar spaces, as well as the contribution of the proposed Bidirectional Domain Attention~(BDA). As shown in Tab.~\ref{tab:odd_comparison} and Fig.~\ref{fig:bev}, incorporating uncertainty calibration reduces CD by 30.55\%-37.18\% and improves F-score by 27.27\%-38.71\%, indicating that modeling anisotropic uncertainty provides a more physically consistent and conflict-free geometric representation. In addition, the BDA module improves the intensity-only baseline by 14.98\%-15.95\%, demonstrating that BDA effectively leverages the mutual complementarity between spatial structure and Doppler coherence, thereby enhancing feature reliability under cluttered conditions.
\section{Conclusion}
In this work, we presented a spatial uncertainty field learning framework for radar perception that fundamentally predicts spatial detections with anisotropic uncertainty. 
To mitigate ill-posed geometric inference in low-fidelity radar space, we design an anisotropic probabilistic model for uncertainty learning. Additionally, we design Bidirectional Domain Attention that effectively exploits the mutual complementarity between spatial structure and Doppler coherence, enabling the model to suppress spurious scatters. Comprehensive evaluations across public and real-world datasets confirm the strong  representational effectiveness, reliability, and scalability of our approach on downstream tasks.

\section*{Acknowledgments}
This work was supported in part by National Natural Science Foundation of China with Grant 62522214, 62471194, Key Research and Development Program of Hubei Province under grant number 2025BAB023.
{
    \small
    \bibliographystyle{ieeenat_fullname}
    \bibliography{main}
}


\end{document}